\newtheorem{thm}{Theorem}
\newtheorem{lem}{Lemma}
\newtheorem{ex}{Fact}
\begin{document}
%
\title{Multi-armed Bandit Problem with Known Trend}

\author{\IEEEauthorblockN{Djallel Bouneffouf}
\IEEEauthorblockA{Canada's Michael Smith Genome Sciences Centre,\\
University of British Columbia,\\
Vancouver, British Columbia, Canada\\
Email: dbouneffouf@bcgsc.ca\\
}
\and
\IEEEauthorblockN{Raphael F\'eraud}
\IEEEauthorblockA{Orange Labs, \\
2 av. Pierre Marzin,\\
22300 Lannion (France)\\
Email: Raphael.feraud@orange.com\\
}
}


%


\maketitle

\begin{abstract}
We consider a variant of the multi-armed bandit model, which we call multi-armed bandit problem with known trend, where the gambler knows the shape of the reward function of each arm but not its distribution. This new problem is motivated by different on-line problems like active learning, music and interface recommendation applications, where when an arm is sampled by the model the received reward change according to a known trend. By adapting the standard multi-armed bandit algorithm UCB1 to take advantage of this setting, we propose the new algorithm named Adjusted Upper Confidence Bound (A-UCB) that assumes a stochastic model. We provide upper bounds of the regret which compare favourably with the ones of UCB1. We also confirm that experimentally with different simulations.
\end{abstract}

\begin{IEEEkeywords}
Multi-armed Bandit; Online learning; Recommender systems.
\end{IEEEkeywords}

%
\IEEEpeerreviewmaketitle

\section{Introduction}
The basic formulation of the Multi-Armed Bandit (MAB) problem can be described as follows: there are K arms, each having a fixed, unknown and independent probability-distribution of reward. At each step, a player chooses an arm and receives a reward. This reward is drawn according to the selected arm's distribution and it is independent of previous actions. Under this assumption, many policies have been proposed to optimize the long-term accumulated reward. 

A challenging variant of the MAB problem is the non-stationary bandit problem where the player must decide which arm to play while facing the possibility of a changing environment. We study here a special case of this model where the rewards of each arm of the bandit follow a known function. 
In this setting, is it possible to adapt standard bandit algorithms to take advantage of this new setting?

The answer of this question is interesting by itself: it could open new doors from a theoretical point of view. But the real motivation is operational: knowing the shape of the reward function assumption is realistic for several real-world problems like on-line active learning, A/B testing and music recommendation.
For instance, in \cite{BouneffoufLUFA14}, the analysis of the active learning problem led the authors to model the active learning problem as a MAB problem. They cluster at first the input space: each cluster is considered as an arm. In this setting the authors find that the more an area is sampled by the model the less is the received reward.
In \cite{hu2011nextone}, the authors study the recommendation of music where they observe that the interest of a user to a music follows the inverse of an exponential function called forgetting curve, where the more a music is heard the lesser it is interesting. Another model that follows a reward with known function was studied in \cite{rosman2014user}. The authors observe that when they propose a new interface to a user, at the beginning, the user dislikes it, but after using it several times, the user begin to like it, which means that if we model this problem as a bandit where the interface is an arm, we can say that the reward of the arm start to be bad at the beginning and it increases by time.

From the three above examples, we can say that all these problems can be modeled as new bandit problem called ``Multi-armed Bandit Problem with Known Trend" where each arm follow a known trend reward function. For instance, in the first two examples the rewards follow a decreasing function and the third one follows a sigmoid function. 
In this setting we propose to study this new model derived from this problem, by adapting the existing algorithm to the new setting and analysing their regret. Finally, we evaluate the proposed algorithms through different simulations.

The remaining of the paper is organized as follows. Section \ref{sec:related} reviews related works. Section \ref{sec:statement} describes the setting MAB model with known trend reward function and the proposed algorithm A-UCB. Then we proof its regret in Section \ref{sec:Analysis}. The experimental evaluation through different simulations is illustrated in Section \ref{sec:experimental}. The last section concludes the paper and points out possible directions for future works.

\section{RELATED WORK}
\label{sec:related}

This section provides an overview on the MAB problem related to our work. In the bandit problem, each arm delivers rewards that are independently drawn from an unknown distribution. An efficient solution based on optimism in the face of uncertainty principle has been proposed proposed by Lai and Robbins \cite{LaiRobbins1985} compute an index for each arm and they choose the arm with the highest index. 

Our work is an adaptation of these classes of policies for MAB Problem with known trend reward function. Our work is most related to the study of dynamic versions of the MAB where either the set of arms or their expected reward may change over time. 
There are several applications, including active learning, music and interface recommendation, where the rewards are far from being stationary random sequences. A solution to cope with non-stationary is to drop the stochastic reward assumption and assume the reward sequences to be chosen by an adversary. Even with this adversarial formulation of the MAB problem, a randomized strategy like EXP3 provides the guarantee of a minimal regret 
\cite{4,AllesiardoFB14}.

Another work done in \cite{garivier2008upper} considers the situation where the distributions of rewards remain constant over epochs and change at unknown time instants. They analyze two algorithms: the discounted UCB and the sliding-window UCB and they establish for these two algorithms an upper-bound for the expected regret by upper-bounding the expectation of the number of times a suboptimal arm is played. They establish a lower-bound for the regret in presence of abrupt changes in the arms reward distributions.

Similar to \cite{garivier2008upper}, authors in \cite{MellorS13} propose a Thompson Sampling strategy equipped with a Bayesian change point mechanism to tackle this problem. They develop algorithms for a variety of cases with constant switching rate: when switching occurs all arms change (Global Switching), switching occurs independently for each arm (Per-Arm Switching), when the switching rate is known and when it must be inferred from data.

Motivated by task scheduling, the author in \cite{gittins1979bandit} proposed a policy where only the state of the arm currently selected can change in a given step, and proved its optimality for time discounting. This result gave rise to a rich line of work.
For example, \cite{whittle1988restless,nino2001restless} studied the restless bandits, where the states of all arms can change in each step according to an arbitrary stochastic transition function.

To deal with the partial information nature of the bandit problem, in Adapt-Eve \cite{Hartland2006}
 the mean reward of the estimated best arm is monitored.  The drawback of this approach is that it does not tackle the case of a suboptimal arm becoming the best arm.

In \cite{BurtiniLL15} author study specific classes of drifting restless bandits selected for their relevance to modelling an online website optimization process. The contribution was a feasible weighted least squares technique capable of utilizing contextual arm parameters while considering the parameter space drifting non-stationary within reasonable bounds.

Another line of work studies the non-stationary reward of arms by considering that each arm has a finite lifetime. In this mortal bandits setting, each disappearing arm changes the set of available arms. Several algorithms were proposed and analyzed in \cite{NIPSMortal1} for mortal bandits under stochastic reward assumptions. 
In sleeping bandits problem \cite{kanade2009sleeping}, the set of strategies is fixed but only a subset of them available in each step. In their model they study the mixture-of-experts paradigm, where a set of experts is specified in each time period. The goal of the algorithm is to choose one expert in each time period to minimize regret against the best mixture of experts.

Our new model can be considered an extension of the work done in \cite{NIPSMortal1,kanade2009sleeping}, the main difference is in the fact that, in our case the reward function of each arm can follow any function not specially a decreasing function. Our model can also be a specification of the general model of restless bandits with a known shape of the reward function. 

\section{PROBLEM STATEMENT}
\label{sec:statement}

In this section, we present the algorithm and the main theorem that bounds its regret. Before that, we first provide the setting of our problem.
In the MAB setting, to maximize his gain the player has to find the best arm as soon as possible, and then exploit it. In our setting, the rewards follow a known function. When the player has found the best arm, he knows that this arm  will be the best just for a certain period of time. The player needs to re-explore at each time to find the next best arm. In the following, we define our setting.

Let $r_i(1),....,r_i(n)$ be a sequence of independent draws of the random variable $r_i \in [0,1]$ with $n$ the number of trials and let $\mu_i=E[r_i]$ be its mean reward.
At each time $t$, the player chooses an arm $i \in \{1,...,K\}$ to play according to a (deterministic or random) policy $\phi$ based on sequence of plays and reward, and obtains a non-stationary reward $z(t)$ where $z(t)= r_{i_t}(t) \cdot D(n_{i_t}(t))$, where $D(n_{i_t}(t))$ is a trend reward function assumed to be known, $n_{i_t}(t)$ is the number of times $i$ is played and $r_{i_t}(t)$ is the stationary reward for arm $i$ at time $t$.

A dynamic policy can be defined as function such that $(\phi: t\mapsto n_{1}(t),...,n_{K}(t))$
 or $\phi: H_{t-1} \mapsto K$, 
 
 where $H_{t-1}$ is the history of rewards known at time $t$.
 
By applying a policy $\phi$, at time $t$ a sequence of choices  is obtained $(1,2,...,t) \in [K]^t$.

At time $t$, the gain of the policy $\phi$ is:
\begin{eqnarray*}                                     
G_{\phi} (t)=\sum_t z(t)=\sum_t r_{i_t}(t) D(n_{i_t}(t))
\end{eqnarray*}
The performance of a policy $\phi$ is measured in terms of regret in the first $T$ plays, which is defined as the expected difference between the total rewards collected by the optimal policy $\phi^*$ (playing at each time instant the arm $i^*$ with the highest expected reward) and the total rewards collected by the policy $\phi$.

The objective is to minimize the regret $R(T)$ at time $T$, where $T$ is the time horizon.

\textbf{Definition 1.} The expected regret after $T$ plays may be expressed as:
\begin{equation}
 \label{eq:prn}
E[R(T)]=E[G^*(T)]-E[G(T)]\nonumber
\end{equation} 
where $E[G^*(T)]= \sum^k_{i=1} \mu_i \sum^{n^*_i(T)}_{s=1} D(s)$, is the optimal gain expectation, and $E[G(T)]= \sum^k_{i=1} \mu_i E[\sum^{n_i(T)}_{s=1} D(s)]$, the expected gain got by the policy $\phi$. Note that we distribute the expectation because $r_{i}(t)$ and $n_i(t)$ are independent. 

\textbf{Definition 2.} The optimal policy in any time $\phi^*$ consists in always playing the arm $i^* \in\{1,...,K\}$ with largest expected reward:
\begin{equation}
 \label{eq:prn}
i^* = argmax_{i}[\mu_i \cdot D(n_{i}(t))]\nonumber
\end{equation} 
where $\mu_i$ is the expectation of the reward $r_i(t)$.

\textbf{Definition 3.} $F$ is  the cumulative function of $D(s)$ and is expressed as follows: $F(n_i(T))=\sum^{n_i(T)}_{s=1} D(s)$, notice that in the demonstration of the theorem 1, $F$ is assumed to be Lipschitz.

\subsection{Upper Confident Bound with Known Trend}
\label{subsec:UCB}

To adapt the UCB algorithm for the news setting. The proposed A-UCB algorithm computes at each trial $t$ an index $I(i)= (\hat{\mu}_i+c(i)).D(n_{i}(t))$ for each arm $i$, where $c(i)$ is the corresponding confidence interval, so that: 
\begin{eqnarray*}
c(i)=\sqrt{\frac{2 \times log(t)}{n_{i}(t)}}
\end{eqnarray*}

The UCB index is multiplied by $D(n_{i}(t))$ to stop playing the supposed optimal arm when its rewards become suboptimal.

\begin{algorithm}[H]
   \caption{The A-UCB algorithm}
\label{alg:ucb} 
\textbf{Require:} Arm $i \in I$.

\textbf{Foreach} t = 1, 2, . . . ,T \textbf{do}

 Select arm $i_t= argmax_i (\hat{\mu}_i+c(i)).D(n_{i}(t))$ 
 
 Observe reward $r_i(t)$

\textbf{End}     
\end{algorithm}
\begin{thm}\label{theorem:Theorem2}
For the bandit problem with known trend, the accumulated expected regret $R$ of A-UCB policy is bounded by 
\begin{eqnarray*}
E[R(T)]\leq max_i \mu_i D_{max} \sum_{i:n^*_i(T)>n_i(T)} \frac{8 ln T}{\Delta_i^{'2}}+ K \frac{\pi^2}{3}
\end{eqnarray*}
\end{thm}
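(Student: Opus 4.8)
The plan is to mirror the classical UCB1 analysis, routing the trend through the Lipschitz property of its cumulative sum $F$. First I would rewrite the regret of Definition~1 as a sum of per-arm deficits. Since the optimal policy of Definition~2 depends only on the fixed means $\mu_i$ and on deterministic counts, each $n_i^*(T)$ is deterministic, so
\[
E[R(T)] = \sum_{i=1}^K \mu_i\bigl(F(n_i^*(T)) - E[F(n_i(T))]\bigr).
\]
Because $D(s)\ge 0$ makes $F$ non-decreasing, every arm that A-UCB over-plays contributes a non-positive term and may be discarded; on the remaining under-played arms I apply the Lipschitz bound $F(a)-F(b)=\sum_{s=b+1}^a D(s)\le D_{\max}(a-b)$ and factor out $\max_i\mu_i$, obtaining
\[
E[R(T)] \le \max_i \mu_i\, D_{\max}\!\!\sum_{i:\,n_i^*(T)>n_i(T)}\!\!\bigl(n_i^*(T)-E[n_i(T)]\bigr).
\]

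The core of the argument is then to bound the expected deficit $n_i^*(T)-E[n_i(T)]$ of each under-played arm by $\tfrac{8\ln T}{\Delta_i^{'2}}+\tfrac{\pi^2}{3}$. I would reuse the standard counting device: writing the deficit as a sum of indicators of the rounds in which A-UCB declines to select the arm the optimal policy favours, I argue that such a miss forces one of three events --- the empirical mean of the favoured arm falls below its confidence floor, the empirical mean of a competing arm rises above its confidence ceiling, or the confidence radius $c(i)$ still exceeds half the trend-adjusted gap $\Delta_i'$. Taking the threshold $\ell=\lceil 8\ln T/\Delta_i^{'2}\rceil$ eliminates the third event, Hoeffding's inequality bounds each of the first two by $t^{-4}$, and summing over $t\le T$ gives the tail constant $\pi^2/3$ per arm; adding the threshold contribution and summing over the at most $K$ under-played arms yields $\sum_i 8\ln T/\Delta_i^{'2}$ together with $K\pi^2/3$. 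Combined with the previous display this is exactly the claimed bound.

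The step I expect to be the main obstacle is the third event and, with it, the very definition of $\Delta_i'$. Unlike UCB1, the index $(\hat\mu_i+c(i))D(n_i(t))$ carries a multiplicative, time-varying factor $D(n_i(t))$, and the arm preferred by the optimal policy itself changes with the counts, so a misordering of the \emph{scaled} indices does not translate directly into a confidence failure on the \emph{unscaled} empirical means. I therefore have to define $\Delta_i'$ as a trend-adjusted gap that uniformly lower-bounds the separation between the effective rewards $\mu_{i^*}D(\cdot)$ and $\mu_i D(\cdot)$ over the relevant rounds, and to control the ratios of the $D$-factors so that the scaled comparison still implies a Hoeffding-type statement about $\hat\mu_i-\mu_i$ of width at least $\tfrac12\Delta_i'$. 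This reduction is precisely where the boundedness of $D$ and the Lipschitz assumption on $F$ must be invoked, and it is the most delicate part of the proof.
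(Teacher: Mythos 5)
Your proposal follows essentially the same route as the paper's own proof: the same reduction of the regret to per-arm count deficits via the $D_{\max}$-Lipschitz property of $F$ (the paper's Lemma~1), followed by the same UCB1-style counting argument in which the index comparison plus Chernoff--Hoeffding tails of order $t^{-4}$ yield the trend-adjusted gap $\Delta_i' = \tfrac{D_{\min}}{D_{\max}}\mu_{i^*} - \mu_i$, the threshold $8\ln T/\Delta_i'^2$, and the $\pi^2/3$ constant per arm (the paper's Lemma~2). If anything, your treatment of the over-played versus under-played arms (discarding the non-positive terms and bounding the under-played ones) is cleaner and more consistent with the summation index $i:n^*_i(T)>n_i(T)$ in the theorem statement than the paper's own derivation, which switches between the two index sets.
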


where $\Delta_i'=  \frac{D_{min}}{D_{max}} \mu_{i_t^*} - \mu_{i_t}$ with $D_{min}$ and $D_{max}$ two lipschitz constants.

\subsection{Proof of Theorem 1}
\label{sec:Analysis}

To prove the theorem 1, we start by upper bounding $E[R(T)]$ by $max_i \mu_i D_{max}$

$\sum_{i:n_i(T)>n^*_i(T)} |E[n_i(T)]-n^*_i(T)|$ in lemma \ref{sec:Lemma2} and after that we bound 

$\sum_{i:n_i(T)>n^*_i(T)} |E[n_i(T)]-n^*_i(T)|$ in lemma \ref{sec:Lemma3}. 

We start by introducing some fact.

\begin{ex} (Chernoff-Hoeffding bound) Let $X_i\in [0,1]$ an independent random variables with $\mu=E[X_i]$. 
\begin{eqnarray*}
Pr(\frac{1}{n_{i}(t)} \sum^{n_{i}(t)}_{i=1} X_i-\mu \geq \epsilon)\leq e^{-2n_{i}(t)\epsilon^2}
\end{eqnarray*}
, and 
\begin{eqnarray*}
Pr(\frac{1}{n_{i}(t)} \sum^{n_{i}(t)}_{i=1} X_i-\mu \leq -\epsilon)\leq e^{-2n_{i}(t)\epsilon^2}
\end{eqnarray*}
\end{ex}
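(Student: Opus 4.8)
The plan is to establish the upper-tail inequality by the exponential-moment (Chernoff) method and then deduce the lower tail from it by symmetry. Throughout write $n \eqdef n_i(t)$ and $S \eqdef \sum_{j=1}^{n} X_j$, so that $E[S]=n\mu$ and the target reduces to showing $Pr(S-n\mu \geq n\epsilon) \leq e^{-2n\epsilon^2}$.

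First I would introduce a free parameter $s>0$, apply Markov's inequality to the exponentiated deviation $e^{s(S-n\mu)}$, and then factorize the resulting expectation using the independence of the $X_j$:
\begin{eqnarray*}
Pr(S - n\mu \geq n\epsilon) \leq e^{-sn\epsilon}\, E\!\left[e^{s(S-n\mu)}\right] = e^{-sn\epsilon} \prod_{j=1}^{n} E\!\left[e^{s(X_j-\mu)}\right].
\end{eqnarray*}

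The key step, which I expect to be the main obstacle, is Hoeffding's lemma: for any random variable $Y$ with $E[Y]=0$ and $Y\in[a,b]$ almost surely, one has $E[e^{sY}] \leq e^{s^2(b-a)^2/8}$. I would prove this by setting $\psi(s)=\log E[e^{sY}]$, verifying $\psi(0)=\psi'(0)=0$, and bounding $\psi''(s)$ by the variance of the exponentially tilted distribution supported in $[a,b]$, which never exceeds $(b-a)^2/4$; a second-order Taylor expansion of $\psi$ about $0$ then yields the stated bound. Applying the lemma to $Y=X_j-\mu\in[-\mu,\,1-\mu]$, which has mean zero and range $b-a=1$, gives $E[e^{s(X_j-\mu)}]\leq e^{s^2/8}$ for every $j$, so the product above is at most $e^{ns^2/8}$.

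Combining these estimates yields $Pr(S-n\mu\geq n\epsilon)\leq e^{\,n(s^2/8 - s\epsilon)}$ for all $s>0$. I would then minimize the exponent over $s$: the quadratic $s^2/8-s\epsilon$ attains its minimum value $-2\epsilon^2$ at $s=4\epsilon$, producing exactly the claimed upper-tail bound $e^{-2n\epsilon^2}$. Finally, the lower tail follows by applying this result to the variables $X_j' \eqdef 1-X_j\in[0,1]$, which have means $1-\mu$ and satisfy $\tfrac1n\sum_j X_j' - (1-\mu) = -\big(\tfrac1n\sum_j X_j - \mu\big)$; hence $Pr\big(\tfrac1n\sum_j X_j-\mu\leq-\epsilon\big)=Pr\big(\tfrac1n\sum_j X_j'-(1-\mu)\geq\epsilon\big)\leq e^{-2n\epsilon^2}$, completing the proof.
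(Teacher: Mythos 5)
Your proof is correct, but there is nothing in the paper to compare it against: the paper records this statement as a Fact (the classical Chernoff--Hoeffding inequality) and uses it as a black box in the proof of Lemma 2, giving no derivation of its own. What you supply is the standard textbook argument, and every step checks out: Markov's inequality applied to $e^{s(S-n\mu)}$, factorization of the moment generating function using independence (valid here since the $X_j$ are independent with common mean $\mu$, exactly as hypothesized), Hoeffding's lemma $E[e^{sY}]\leq e^{s^2(b-a)^2/8}$ for a centered variable $Y\in[a,b]$ proved via the second derivative of $\psi(s)=\log E[e^{sY}]$, the specialization $Y=X_j-\mu\in[-\mu,1-\mu]$ with $b-a=1$, and the optimization of $s^2/8-s\epsilon$ at $s=4\epsilon$ yielding the exponent $-2\epsilon^2$; the reflection $X_j'=1-X_j$ correctly delivers the lower tail. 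You also sensibly renamed the summation index to $j$, repairing the paper's notational clash in which $i$ serves simultaneously as the arm index and the summation index. One caveat worth keeping in mind: your argument, like the Fact as stated, treats the sample size $n_i(t)$ as deterministic, whereas in the bandit application $n_i(t)$ is a random count of pulls; this gap is not yours to close here, since the paper handles it separately in the proof of Lemma 2 via a union bound over the possible values of $n_{i_t}(t)$, but it explains why the Fact alone does not immediately apply to the empirical means appearing in the algorithm.
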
 

\begin{ex} (L-lipschitz) Let $M$ a part from $R$, $f:M\longrightarrow R$ a function and $L$ a real positive.
\begin{eqnarray*}
\forall(x,y) \in M^2, |f(x)-f(y)|\leq L |x-y|
\end{eqnarray*}
\end{ex}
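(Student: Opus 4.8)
The statement labelled here as a Fact is in truth the \emph{definition} of an $L$-Lipschitz map rather than a proposition with hypotheses and a conclusion, so there is no implication to establish: it merely fixes a subset $M\subseteq\mathbb{R}$, a map $f:M\to\mathbb{R}$ and a constant $L>0$, and declares that $f$ is called $L$-Lipschitz exactly when $|f(x)-f(y)|\le L|x-y|$ holds for every pair $(x,y)\in M^2$. Read as a universal assertion about an arbitrary $f$, the inequality is of course false, since not every function is Lipschitz; the only coherent reading is thus definitional, and under that reading any ``proof'' is vacuous. The plan, therefore, is not to derive the inequality but to record it as a naming convention and to make explicit the single place in the argument where it will actually be used.

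Concretely, the work this Fact does lives in the proof of Theorem \ref{theorem:Theorem2}. First I would take $f=F$, the cumulative trend $F(n)=\sum_{s=1}^{n}D(s)$ introduced in Definition 3 and there \emph{assumed} to be Lipschitz, and apply the defining inequality at the integer arguments $x=n_i(T)$ and $y=n_i^*(T)$. This converts a difference of two cumulative-trend sums, which is what the gap between the realised and optimal gains reduces to, into a quantity controlled linearly by $|n_i(T)-n_i^*(T)|$; this is precisely the passage from the regret to $\max_i\mu_i D_{max}\sum_{i}|E[n_i(T)]-n_i^*(T)|$ that the proof sketch announces ahead of its two lemmas. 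The constants $D_{min}$ and $D_{max}$ named in the theorem are then read off as the lower and upper Lipschitz slopes of $F$, since a one-step increment $F(n+1)-F(n)=D(n+1)$ is squeezed between them, and these are exactly the quantities feeding the modified gap $\Delta_i'=\frac{D_{min}}{D_{max}}\mu_{i_t^*}-\mu_{i_t}$.

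The only point that would demand genuine care, and hence the closest analogue to an obstacle, is the soundness of applying a continuous Lipschitz bound to the discrete, integer-valued counts $n_i(T)$ and to the staircase-valued $F$: I would check that $F$, extended to the reals or read only on integer differences, really does satisfy $|F(x)-F(y)|\le D_{max}|x-y|$ with the advertised constant, so that the corresponding step in the regret decomposition of Section \ref{sec:Analysis} is justified rather than merely asserted. Beyond that verification there is nothing to prove; the statement stands as the definition that licenses the Lipschitz-linearisation used throughout the analysis.
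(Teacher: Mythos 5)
You are correct: this ``Fact'' is simply the definition of an $L$-Lipschitz function, and the paper accordingly states it without proof, invoking it exactly where you say --- in the proof of Lemma \ref{sec:Lemma2}, where the assumption that $F$ is $D_{max}$-Lipschitz turns $|F(n^*_i(T))-F(E[n_i(T)])|$ into $D_{max}\,|n^*_i(T)-E[n_i(T)]|$. Your reading and its role in the regret decomposition match the paper's treatment exactly, and your added check that the staircase function $F$ satisfies the bound on integer arguments with constant $D_{max}=\max_s D(s)$ is a sound (indeed slightly more careful) gloss on what the paper asserts.
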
 
 

\begin{lem} 
\label{sec:Lemma2}
\begin{eqnarray*}
E[R(T)]\leq max_i \mu_i D_{max} \sum_{i:n_i(T)>n^*_i(T)} |E[n_i(T)]-n^*_i(T)|
\end{eqnarray*}
\end{lem}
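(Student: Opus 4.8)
The plan is to express the regret entirely through the cumulative trend function $F$ of Definition 3 and then to exploit two structural facts: the conservation of the total number of pulls, namely $\sum_{i}n_i(T)=\sum_{i}n^*_i(T)=T$, and the Lipschitz property of $F$ with constant $D_{max}$ (Fact 2).

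First I would start from the regret decomposition of Definition 1. Writing $F(n)=\sum_{s=1}^{n}D(s)$ and using that $r_i(t)$ and $n_i(t)$ are independent (so the expectation distributes), this gives
\[
E[R(T)] = \sum_{i=1}^{K}\mu_i\bigl(F(n^*_i(T))-E[F(n_i(T))]\bigr).
\]
I would then partition the arms into those the policy under-plays, $\{i:n_i(T)\le n^*_i(T)\}$, and those it over-plays, $\{i:n_i(T)> n^*_i(T)\}$. Since $D\ge 0$, the map $F$ is nondecreasing, so every over-played arm contributes a term $F(n^*_i(T))-F(n_i(T))\le 0$; discarding that whole group can only enlarge the right-hand side and leaves a bound involving the under-played arms only.

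Next I would estimate the surviving terms. Bounding $\mu_i\le\max_j\mu_j$ and applying the Lipschitz inequality $F(n^*_i(T))-F(n_i(T))\le D_{max}\,|n^*_i(T)-n_i(T)|$, then averaging, yields
\[
E[R(T)]\le \max_i\mu_i\,D_{max}\!\!\sum_{i:n_i(T)\le n^*_i(T)}\!\!\bigl|n^*_i(T)-E[n_i(T)]\bigr|.
\]
The final step is the rearrangement that moves the sum onto the over-played arms: from $\sum_i E[n_i(T)]=T=\sum_i n^*_i(T)$ we get $\sum_i\bigl(E[n_i(T)]-n^*_i(T)\bigr)=0$, so the total mass by which arms are under-played equals the total mass by which arms are over-played. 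Substituting this identity gives exactly the claimed bound, now summed over $\{i:n_i(T)>n^*_i(T)\}$.

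The step I expect to be most delicate is the passage from $E[F(n_i(T))]$ to the Lipschitz estimate, because $n_i(T)$ is random and the over-/under-played partition is itself data-dependent; one must justify commuting the expectation with the Lipschitz bound, e.g.\ by arguing on the realized pull counts before averaging or by invoking Jensen's inequality together with the monotonicity of $F$. The conservation-of-pulls identity is the conceptual heart of the argument, since it is what converts the a priori nonnegative regret into a quantity controlled solely by the over-played arms, and I would state it as an exact equality before introducing any inequalities.
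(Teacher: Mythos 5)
Your proposal is correct, and it takes a genuinely different --- and in fact logically sounder --- route than the paper's own proof. The paper splits the sum into under-played and over-played arms just as you do, but then discards the \emph{under-played} group $\{i : n_i(T) < n^*_i(T)\}$, asserting it vanishes ``because there is no regret when $n_i(T) \le n^*_i(T)$'', and applies the Lipschitz bound directly to the over-played group. This is backwards: in the decomposition $\sum_i \mu_i\bigl(F(n^*_i(T)) - F(E[n_i(T)])\bigr)$ the nonnegative contributions come precisely from the under-played arms, while the over-played terms are nonpositive, so the paper keeps the negative part (rendered positive only by inserting absolute values) and never explains why that quantity controls the regret. Your argument supplies exactly the ingredient the paper is missing: you drop the over-played terms (legitimately, since they are $\le 0$ by monotonicity of $F$), apply the $D_{max}$-Lipschitz bound to the under-played terms, and then invoke the conservation of pulls $\sum_i E[n_i(T)] = T = \sum_i n^*_i(T)$ to transfer the count mass onto the over-played arms, which is what the lemma's right-hand side actually sums over. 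The paper never states this identity, yet without it its final bound does not follow from its own chain of inequalities; with it, your chain closes cleanly. One caveat you share with the paper: the passage from $E[F(n_i(T))]$ to $F(E[n_i(T)])$ is justified there by ``$F$ is monotonous'', which is insufficient (the needed direction of Jensen's inequality requires convexity of $F$, i.e.\ non-decreasing $D$); you flag this same delicacy explicitly, and your suggestion to argue on realized pull counts before averaging is the cleaner repair, though the random index set $\{i : n_i(T) > n^*_i(T)\}$ appearing in the lemma statement itself remains an imprecision inherited from the paper rather than a flaw of your argument.
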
 

\begin{proof}

We know that $E[R(T)]=E[G^*(T)]-E[G(T)]$

\begin{eqnarray*}\Rightarrow E[R(T)]=\sum^k_{i=1} \mu_i \sum^{n^*_i(T)}_{s=1} D(s)-\sum^k_{i=1} \mu_i E[\sum^{n_i(T)}_{s=1} D(s)]
\end{eqnarray*}

\begin{eqnarray*}\Rightarrow E[R(T)]= \sum^k_{i=1} \mu_i [F(n^*_i(T))- E[F(n_i(T))]]
\end{eqnarray*}

$F$ is monotonous. 

\begin{eqnarray*}
\Rightarrow E[R(T)] \leq \sum^k_{i=1} \mu_i [ F(n^*_i(T))- F(E[n_i(T)])]
\end{eqnarray*}

\begin{eqnarray*}
\begin{split}
\Rightarrow E[R(T)]&= \sum_{i:n^*_i(T)>n_i(T)} \mu_i [F(n^*_i(T))-F(E[n_i(T)])]\\
&+\sum_{i:n^*_i(T)\leq n_i(T)} \mu_i [F(n^*_i(T))-F(E[n_i(T)])]
\end{split}
\end{eqnarray*}

$F$ is an increasing function.

\begin{eqnarray*}
\begin{split}
\Rightarrow E[R(T)] &\leq  \sum_{i:n^*_i(T)>n_i(T)} \mu_i |F(n^*_i(T))-F(E[n_i(T)])|\\
&+\sum_{i:n^*_i(T)\leq n_i(T)} \mu_i |F(n^*_i(T))-F(E[n_i(T)])|
\end{split}
\end{eqnarray*}

\begin{eqnarray*}
\sum_{i:n^*_i(T)>n_i(T)} \mu_i |F(n^*_i(T))-F(E[n_i(T)])|=0,
\end{eqnarray*}
because there is no regret when $n_i(T) \leq n^*_i(T)$ 

\begin{eqnarray*}
E[R(T)] \leq  \sum_{i:n^*_i(T)\leq n_i(T)} \mu_i |F(n^*_i(T))-F(E[n_i(T)])|
\end{eqnarray*}

When $F$ is $D_{max}-Lipschitz$ we have,

\begin{eqnarray*}
E[R(T)]\leq \sum_{i:n_i(T)>n^*_i(T)} \mu_i \cdot D_{max} \cdot |n^*_i(T)-E[n_i(T)]|
\end{eqnarray*}
where $D_{max}=max_sD(s)$ and $s\in [1,T]$

\begin{eqnarray*}
\begin{split}
\Rightarrow E[R(T)] &\leq max_i \mu_i D_{max} \\
 & . \sum_{i:n_i(T)>n^*_i(T)} |n^*_i(T)-E[n_i(T)]|
\end{split}
\end{eqnarray*}

\begin{eqnarray*}
\begin{split}
\Rightarrow E[R(T)] & \leq max_i \mu_i D_{max} \\
 & . \sum_{i:n_i(T)>n^*_i(T)} |E[n_i(T)]-n^*_i(T)|
\end{split}
\end{eqnarray*}
\end{proof}

In lemma \ref{sec:Lemma3} we are going to bound the $E[n_i(T)]-n^*_i(T)$ which is equal to 

$\sum^T_{t=1} Pr(i(t)=i) 1\{n_i(t)\geq n^*_i(t)\}$

\begin{lem} 
\label{sec:Lemma3}
\begin{eqnarray*}
\sum^T_{t=1} Pr(i(t)=i) 1\{n_i(t)\geq n^*_i(t)\}\leq \frac{8log T}{\Delta^{'2}_i}+\frac{\pi^2}{3}
\end{eqnarray*}
\end{lem}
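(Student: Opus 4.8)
The strategy is to transpose the classical UCB1 regret analysis to the trend-scaled index of A-UCB, the only genuinely new ingredient being the factors $D_{min}$ and $D_{max}$ that turn the usual suboptimality gap $\mu_{i^*}-\mu_i$ into $\Delta_i'$. I would first fix the threshold $\ell \eqdef \lceil 8\log T/(\Delta_i')^2\rceil$ and split the sum according to whether arm $i$ has already been pulled fewer or more than $\ell$ times. Since each play of $i$ increments $n_i$, the number of rounds with $n_i(t)<\ell$ is at most $\ell$, producing the leading $8\log T/(\Delta_i')^2$ term; it then remains to show that the rounds with $n_i(t)\geq \ell$ contribute only a constant.

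The core step is to characterise the selection event $\{i(t)=i\}$. A-UCB plays $i$ at round $t$ only when $(\hat{\mu}_i+c(i))D(n_i(t)) \geq (\hat{\mu}_{i^*}+c(i^*))D(n_{i^*}(t))$. Since $D$ takes values in $[D_{min},D_{max}]$ over $[1,T]$, I would bound the right-hand factor below by $D_{min}$ and the left-hand factor above by $D_{max}$, obtaining $\hat{\mu}_i+c(i) \geq \frac{D_{min}}{D_{max}}(\hat{\mu}_{i^*}+c(i^*))$. As in UCB1, this can hold only if at least one of three events occurs: the optimal arm is underestimated, $\hat{\mu}_{i^*}\leq \mu_{i^*}-c(i^*)$; the arm $i$ is overestimated, $\hat{\mu}_i \geq \mu_i + c(i)$; or the confidence radius is still too large, $\frac{D_{min}}{D_{max}}\mu_{i^*} < \mu_i + 2c(i)$. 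Substituting $c(i)=\sqrt{2\log t/n_i(t)}$ and rearranging the third event gives $n_i(t) < 8\log t/(\Delta_i')^2$, so once $n_i(t)\geq \ell$ this event is impossible and only the two deviation events survive.

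It then remains to bound the two deviation probabilities. Applying the Chernoff--Hoeffding bound (Fact 1) with $\epsilon=\sqrt{2\log t/s}$ gives each probability at most $\rme^{-4\log t}=t^{-4}$; crucially, the scaling by $D$ is deterministic, so the randomness lives entirely in the empirical means and Fact 1 applies verbatim. Summing over the admissible counts $s^*$ of $i^*$, the counts $s$ of $i$, and over $t$,
\begin{eqnarray*}
\sum_{t=1}^{\infty}\sum_{s^*=1}^{t}\sum_{s=1}^{t} 2t^{-4} \leq \sum_{t=1}^{\infty} 2t^{-2} = \frac{\pi^2}{3},
\end{eqnarray*}
which supplies the additive $\pi^2/3$ term and completes the bound.

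The step I expect to be most delicate is the passage involving $D$: one must check that replacing $D(n_i(t))$ by $D_{max}$ and $D(n_{i^*}(t))$ by $D_{min}$ is done in the direction that preserves the selection inequality, since this is exactly what converts the classical gap into $\Delta_i'=\frac{D_{min}}{D_{max}}\mu_{i^*}-\mu_i$ inside the threshold. A secondary point is to argue that the indicator $1\{n_i(t)\geq n^*_i(t)\}$ only restricts the range of the sum and never strengthens the event whose probability we bound, so the UCB1 counting carries over unchanged.
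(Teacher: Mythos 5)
Your proposal is correct and follows essentially the same route as the paper: both transpose the classical UCB1 counting argument, splitting rounds at the threshold $8\log t/(\Delta_i')^2$, bounding the two deviation events via Chernoff--Hoeffding with $\epsilon=\sqrt{2\log t/n_i(t)}$ to get probability $t^{-4}$ each, and summing over the pairs of counts to obtain the additive $\pi^2/3$ term. The only cosmetic difference is ordering: you replace $D(n_i(t))$ and $D(n_{i^*}(t))$ by $D_{max}$ and $D_{min}$ at the level of the selection inequality before invoking the confidence bounds, whereas the paper keeps the ratio $D(n_{i^*}(t))/D(n_i(t))$ explicit and only lower-bounds it by $D_{min}/D_{max}$ at the final rearrangement --- both yield the same gap $\Delta_i'$, and your writeup is in fact tighter about which event fails when $n_i(t)$ exceeds the threshold.
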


\begin{proof}

Let $B(i)=\sqrt{\frac{2 log t}{n_{i_t}(t)}}$  and from hoeffding we know that with $1\leq n_{i_t}(t) \leq t$, we have
\begin{equation}
 \label{eq:pr21}
Pr(\hat{\mu}_{i_t}+B(i)\leq \mu_{i_t})\leq e^{-4log(t)}=t^{-4}
\end{equation} 
and 
\begin{equation}
 \label{eq:pr3}
Pr(\hat{\mu}_{i_t}-B(i) \ge \mu_{i_t})\leq e^{-4log(t)}=t^{-4}
\end{equation} 
Suppose that at the time $t$ the empiric mean of each arm is in there confidence interval,
\begin{equation}
 \label{eq:pr4}
D(n_{i_t}(t))\cdot(\mu_{i_t}-B(i)) \overset{(a)}{\leq} D(n_{i_t}(t))\cdot \hat{\mu}_{i_t}
\end{equation} 

\begin{equation}
 \label{eq:pr5}
\overset{(b)}{\leq} D(n_{i_t}(t))\cdot (\mu_{i_t}+B(i))\nonumber
\end{equation} 

Let $i_t$ a suboptimal arm and $i_t^*$ an optimal arm. If the arm $i_t$ is played at the time $t$, its means that

\begin{eqnarray*}
\begin{split}
D(n_{i_t}(t))\cdot(\hat{\mu}_{i_t}+\sqrt{\frac{2 log t}{n_{i_t}(t)}}) \geq & D(n_{i_t^*}(t))\cdot(\hat{\mu}_{i_t^*} +\sqrt{\frac{2 log t}{n_{i_t^*}(t)}}) \geq \\
& D(n_{i_t^*}(t))\cdot \mu_{i_t^*} 
\end{split}
\end{eqnarray*} 

From (\ref{eq:pr4}), 
\begin{eqnarray*}
\begin{split}
D(n_{i_t}(t))\cdot(\hat{\mu}_{i_t}+\sqrt{\frac{2 log t}{n_{i_t}(t)}})+\\  D(n_{i_t}(t))\cdot\sqrt{\frac{2 log t}{n_{i_t}(t)}} \geq D(n_{i_t^*}(t))\cdot \mu_{i_t^*} 
\end{split}
\end{eqnarray*}

\begin{eqnarray*}
\Rightarrow D(n_{i_t}(t))\cdot(\mu_{i_t}+2\sqrt{\frac{2 log t}{n_{i_t}(t)}}) \geq D(n_{i_t^*}(t))\cdot \mu_{i_t^*} 
\end{eqnarray*}

\begin{eqnarray*}
\Rightarrow 2\sqrt{\frac{2 log t}{n_{i_t}(t)}} D(n_{i_t}(t))\geq D(n_{i_t^*}(t))\cdot\mu_{i_t^*} - D(n_{i_t}(t))\cdot \mu_{i_t}
\end{eqnarray*}

\begin{eqnarray*}
\Rightarrow 2\sqrt{\frac{2 log t}{n_{i_t}(t)}} \geq  \frac{D(n_{i_t^*}(t))}{D(n_{i_t}(t))} \cdot\mu_{i_t^*} - \mu_{i_t} 
\end{eqnarray*}

with $ D_{min} \leq D(n_{i_t}(t))\leq D_{max} $
and
\begin{eqnarray*}
\Delta_i'=  \frac{D_{min}}{D_{max}} \mu_{i_t^*} - \mu_{i_t}
\end{eqnarray*}

we have, 
\begin{eqnarray*}
2\sqrt{\frac{2 log t}{n_{i_t}(t)}} \geq \Delta_i
\end{eqnarray*}
, which means that 
\begin{eqnarray*}
n_{i_t}(t) \leq  \frac{8log t}{\Delta_i^{'2}}
\end{eqnarray*}

For all integer $u$, we have:
\begin{eqnarray}
\label{eq:pr7}
\begin{split}
n_{i_t}(t) &\leq u+ \sum^T_{t=u+1} 1\{I_t=i_t;n_{i_t}(t)>u\} \leq u \\
& + \sum^T_{t=u+1} 1\{\exists n_{i_t}(t) : u < n_{i_t}(t) \leq t,  \\
&\exists n_{i_t^*}(t) : 1 \leq n_{i_t^*}(t) \leq t, \\ 
& B_{t,n_{i_t}(t)}(i_t)\geq B_{t,n_{i_t^*}(t)}(i_t^*) \}
\end{split}
\end{eqnarray}

Now, from (\ref{eq:pr7}) we can say that (\ref{eq:pr4}) $\Rightarrow n_{i_t}(t)<\frac{8logt}{\Delta^2i_t}$ or the inequality (a) or (b) in (\ref{eq:pr4}) is not satisfied. 
Then by choosing $u=\frac{8logt}{\Delta^2_i}$, we infer that (a) or (b) is not satisfied. But from (\ref{eq:pr21}), the inequality (a) is not satisfied with a probability $\leq t^{-4}$, end from (\ref{eq:pr3}) the inequality (b) is not satisfied with a probability $\leq t^{-4}$.
By taking the arithmetic mean in both sides of (4), 
 
\begin{eqnarray*}
E[n_{i_t}(t)]\leq \frac{8log t}{\Delta_i^{'2}} + \sum^{T}_{t=u+1}\sum^{t}_{s=u+1} t^{-4} \sum^{t}_{s=1} t^{-4}
\end{eqnarray*}

\begin{eqnarray*}
\Rightarrow E[n_{i_t}(t)]\leq \frac{8log t}{\Delta_i^{'2}}+\frac{\pi^2}{3}
\end{eqnarray*}

\begin{eqnarray*}
\Rightarrow \sum^T_{t=1} Pr(i(t)=i) 1\{n_i(t)\geq n^*_i(t)\} \leq \frac{8log t}{\Delta_i^{'2}}+\frac{\pi^2}{3}
\end{eqnarray*}
\end{proof}

Following Lemma \ref{sec:Lemma3}:

\begin{eqnarray*}
\begin{split}
\sum_{i:n^*_i(T)>n_i(T)}\sum^T_{t=1} Pr(i(t)=i) 1\{n_i(t)\geq n^*_i(t)\}\leq\\ \sum_{i:n^*_i(T)>n_i(T)} \frac{8 ln T}{\Delta_i^{'2}}+ K \frac{\pi^2}{3},
\end{split}
\end{eqnarray*}
where $k=|i|:n^*_i(T)>n_i(T)$.

with

\begin{eqnarray*}
\begin{split}
\sum_{i:n^*_i(T)>n_i(T)}\sum^T_{t=1} Pr(i(t)=i) 1\{n_i(t)\geq n^*_i(t)\}=\\
\sum_{i:n^*_i(T)>n_i(T)} |n^*_i(T)-E[n_i(T)]|
\end{split}
\end{eqnarray*}

\begin{eqnarray*}
\Rightarrow E[R(T)]\leq max_i \mu_i D_{max} \sum_{i:n^*_i(T)>n_i(T)} \frac{8 ln T}{\Delta_i^{'2}}+ K \frac{\pi^2}{3}
\end{eqnarray*}

\section{EXPERIMENTATION}
\label{sec:experimental}

In order to illustrate the strengths and weaknesses of A-UCB in comparison to the state-of-the-art, three synthetic known trend bandit problems are formulated with three reward functions, decreasing reward function (Figure \ref{fig:drec}), sigmoid reward function (Figure \ref{fig:incr}) and Gaussian reward function (Figure \ref{fig:gaus}). 

In these three problems, we have generated a non-stationary arms based on a known shape of the reward function using 8 arms and we have fixed their mean reward as follows: $ \mu_1 = 0.6$, $ \mu_2 = 0.4$, $ \mu_3 = 0.3$, $ \mu_4 = 0.3$, $ \mu_5 = 0.15$, $ \mu_6 = 0.1$, $ \mu_7 = 0.05$, $ \mu_8 = 0.05$, where $ n\in [0,4000]$, $D(n)= -6.65 ln(n)+9.57$ for Figure \ref{fig:drec}, for Figure \ref{fig:incr} and $D(n)= 0.037 exp(1.15 n)$ for Figure \ref{fig:gaus} $D(n)=exp(-(n-20)^2/40)$. Notice that these three simulations with the number of arms and the functions shapes are reflecting our corporate problems.

In this simulation, at each round, if the algorithm chooses the right arm the reward is 1 or else 0. The accumulated rewards are computed each 1000 iteration. The plot of the curves (Figure \ref{fig:drec}, \ref{fig:incr},  \ref{fig:gaus}) are produced by averaging 20 runs of each algorithm. We run the simulation for 32000 iterations.

\begin{figure*}
\begin{multicols}{2}
    \includegraphics[width=\linewidth]{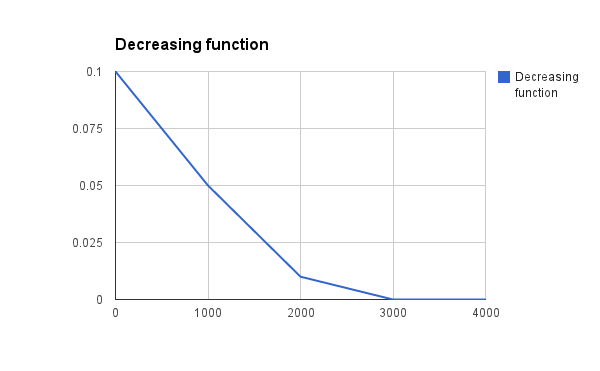}\par 
    \includegraphics[width=\linewidth]{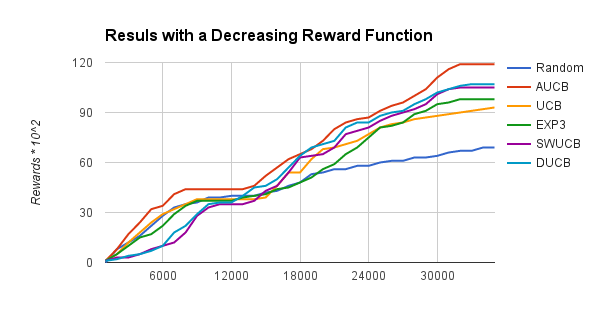}\par 
    \end{multicols}
\caption{Decreasing reward function}\label{fig:drec}
\end{figure*}

\begin{figure*}
\begin{multicols}{2}
    \includegraphics[width=\linewidth]{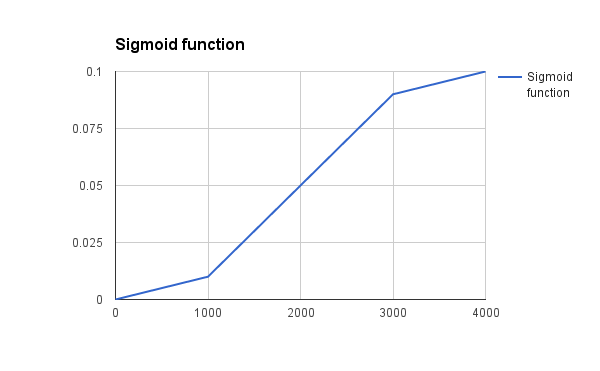}\par 
    \includegraphics[width=\linewidth]{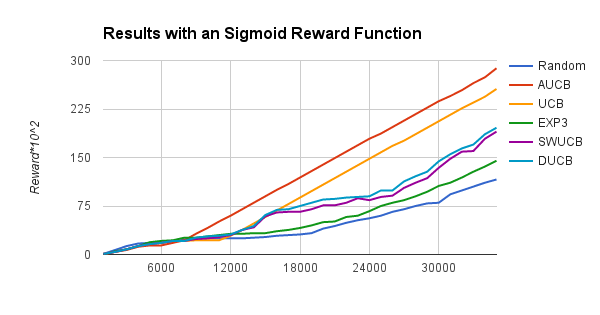}\par 
    \end{multicols}
\caption{Sigmoid reward function}\label{fig:incr}
\end{figure*}

\begin{figure*}
\begin{multicols}{2}
    \includegraphics[width=\linewidth]{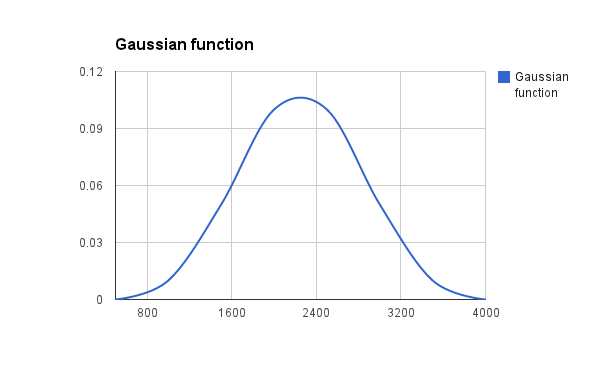}\par 
    \includegraphics[width=\linewidth]{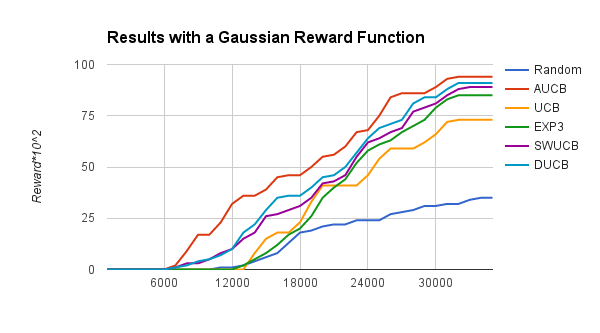}\par 
    \end{multicols}
\caption{Gaussian reward function}\label{fig:gaus}
\end{figure*}

The first problem (Figure \ref{fig:drec}), shows a decreasing reward function, this kind of model can be seen in different real world problem of recommendation system like music or ads recommendation. On this problem, A-UCB quickly outperforms the other algorithms, which is due to the fact that at each iteration the algorithm is aware about the reward function of each arm, this allows it to find the optimal arms at the optimal time. We also observe that an adversarial strategy like EXP3 and switching strategy bandit like SW-UCB are better than a stationary one (UCB), which confirms that the bandits based on a stationary assumption can not solve a non-stationary problem: they perform near the same as a random choice of actions. Another observation is that the higher is the slope of $D(n)$, the smaller is the gap between the A-UCB and the switching bandit algorithm like D-UCB and SW-UCB. Which means that the higher is the slope of $D(n)$ the lower is the convergence of the A-UCB.

In the second problem (Figure \ref{fig:incr}), sigmoid reward function, this kind of model can be seen in different real world problem of recommendation system like interface recommendation. On this problem, A-UCB still outperforms other algorithms and this performance is explained by the rapidity of the A-UCB to find the greatest trade-off between arms. We observe also that UCB outperforms EXP3, SW-UCB and D-UCB after 11000 iterations where the rewards become more stationers. 

In the third problem (Figure \ref{fig:gaus}), the Gaussian reward function can model the reward function of  games or clothes recommendation. In this model we observe that at the increasing part of the rewards function from 0 to 20000 UCB outperform EXP3 but after the EXP3 take over, and that explained by the difficulty of UCB to change his confidence in an arm.

As we can also see in Figure (\ref{fig:drec}, \ref{fig:incr} and \ref{fig:gaus}), that D-UCB performs almost as well as SW-UCB. Both of them waste less time than EXP3 and UCB to detect the breakpoints, and concentrate their pulls on the optimal action.

\section{CONCLUSION}
\label{sec:Conclusion}

We have introduced a new formulation of the MAB problem motivated by the real world problem of active learning, music and interface recommendation. In this setting the set of strategies available to a MAB algorithm changes rapidly over time. We provided an extension that allows UCB algorithm to be used in the case of MAB problem with known trend. Further, we provided an upper bound of regret of the proposed algorithm. Simulations on multiple payoff distributions shows the performance of the proposed algorithm.

\bibliographystyle{ieeetr}
\bibliography{biblio}

\begin{thebibliography}{10}

\bibitem{BouneffoufLUFA14}
D.~Bouneffouf, R.~Laroche, T.~Urvoy, R.~Feraud, and R.~Allesiardo, ``Contextual
  bandit for active learning: Active thompson sampling,'' in {\em Neural
  Information Processing - 21st International Conference, {ICONIP} 2014,
  Kuching, Malaysia, November 3-6, 2014. Proceedings, Part {I}}, pp.~405--412,
  2014.

\bibitem{hu2011nextone}
Y.~Hu and M.~Ogihara, ``Nextone player: A music recommendation system based on
  user behavior.,'' in {\em ISMIR}, pp.~103--108, 2011.

\bibitem{rosman2014user}
B.~Rosman, S.~Ramamoorthy, M.~H. Mahmud, and P.~Kohli, ``On user behaviour
  adaptation under interface change.,'' 2014.

\bibitem{LaiRobbins1985}
T.~L. Lai and H.~Robbins, ``Asymptotically efficient adaptive allocation
  rules,'' {\em Advances in Applied Mathematics}, vol.~6, no.~1, pp.~4--22,
  1985.

\bibitem{4}
P.~Auer, N.~Cesa-Bianchi, and P.~Fischer, ``Finite-time analysis of the
  multiarmed bandit problem,'' {\em Mach. Learn.}, vol.~47, no.~2-3,
  pp.~235--256, 2002.

\bibitem{AllesiardoFB14}
R.~Allesiardo, R.~Feraud, and D.~Bouneffouf, ``A neural networks committee for
  the contextual bandit problem,'' in {\em Neural Information Processing - 21st
  International Conference, {ICONIP} 2014, Kuching, Malaysia, November 3-6,
  2014. Proceedings, Part {I}}, pp.~374--381, 2014.

\bibitem{garivier2008upper}
A.~Garivier and E.~Moulines, ``On upper-confidence bound policies for
  non-stationary bandit problems,'' {\em arXiv preprint arXiv:0805.3415}, 2008.

\bibitem{MellorS13}
J.~Mellor and J.~Shapiro, ``Thompson sampling in switching environments with
  bayesian online change detection,'' in {\em Proceedings of the Sixteenth
  International Conference on Artificial Intelligence and Statistics, {AISTATS}
  2013, Scottsdale, AZ, USA, April 29 - May 1, 2013}, pp.~442--450, 2013.

\bibitem{gittins1979bandit}
J.~C. Gittins, ``Bandit processes and dynamic allocation indices,'' {\em
  Journal of the Royal Statistical Society. Series B (Methodological)},
  pp.~148--177, 1979.

\bibitem{whittle1988restless}
P.~Whittle, ``Restless bandits: Activity allocation in a changing world,'' {\em
  Journal of applied probability}, pp.~287--298, 1988.

\bibitem{nino2001restless}
J.~Nino-Mora {\em et~al.}, ``Restless bandits, partial conservation laws and
  indexability,'' {\em Advances in Applied Probability}, vol.~33, no.~1,
  pp.~76--98, 2001.

\bibitem{Hartland2006}
S.~G. O. T. M.~S. C.~Hartland, N.~Baskiotis, ``Multi-armed bandit, dynamic
  environments and meta-bandits,'' {\em Online Trading of Exploration and
  Exploitation Workshop}, vol.~1, no.~10, pp.~1--34, 2006.

\bibitem{BurtiniLL15}
G.~Burtini, J.~Loeppky, and R.~Lawrence, ``Improving online marketing
  experiments with drifting multi-armed bandits,'' in {\em {ICEIS} 2015 -
  Proceedings of the 17th International Conference on Enterprise Information
  Systems, Volume 1, Barcelona, Spain, 27-30 April, 2015}, pp.~630--636, 2015.

\bibitem{NIPSMortal1}
D.~Chakrabarti, R.~Kumar, F.~Radlinski, and E.~Upfal, ``Mortal multi-armed
  bandits,'' pp.~273--280, 2009.

\bibitem{kanade2009sleeping}
V.~Kanade, H.~B. McMahan, and B.~Bryan, ``Sleeping experts and bandits with
  stochastic action availability and adversarial rewards,'' in {\em
  International Conference on Artificial Intelligence and Statistics},
  pp.~272--279, 2009.

\end{thebibliography}

\end{document}